\documentclass[letterpaper]{article} 
\usepackage{aaai25}  
\usepackage{times}  
\usepackage{helvet}  
\usepackage{courier}  
\usepackage[hyphens]{url}  
\usepackage{graphicx} 
\urlstyle{rm} 
\usepackage{natbib}  
\usepackage{caption} 
\frenchspacing  
\setlength{\pdfpagewidth}{8.5in}  
\setlength{\pdfpageheight}{11in}  
%
\usepackage{algorithm}
\usepackage{algorithmic}
\usepackage{amsmath}
\usepackage{amssymb}
\usepackage{amsthm}
\usepackage{subfigure}
\usepackage{tabularx}
\usepackage{makecell}

\theoremstyle{plain}
\newtheorem{theorem}{Theorem}

\newtheorem{lemma}[theorem]{Lemma}

\theoremstyle{definition}
\newtheorem{definition}[theorem]{Definition}

\theoremstyle{remark}

\usepackage[T1]{fontenc}
%
\usepackage{newfloat}
\usepackage{listings}
\DeclareCaptionStyle{ruled}{labelfont=normalfont,labelsep=colon,strut=off} 
\lstset{%
	basicstyle={\footnotesize\ttfamily},
	numbers=left,numberstyle=\footnotesize,xleftmargin=2em,
	aboveskip=0pt,belowskip=0pt,%
	showstringspaces=false,tabsize=2,breaklines=true}
\floatstyle{ruled}
\newfloat{listing}{tb}{lst}{}
\floatname{listing}{Listing}
%
\pdfinfo{
/TemplateVersion (2025.1)
}

\setcounter{secnumdepth}{0} 

%



\title{AIR: Unifying Individual and Collective Exploration \\in Cooperative Multi-Agent Reinforcement Learning}
\author {
    Guangchong Zhou\textsuperscript{\rm 1,\rm 2},
    Zeren Zhang\textsuperscript{\rm 1,\rm 2},
    Guoliang Fan\textsuperscript{\rm 1}
}
\affiliations {
    \textsuperscript{\rm 1}The Key Laboratory of Cognition and Decision Intelligence for Complex Systems,\\ Institute of Automation, Chinese Academy of Sciences\\
    \textsuperscript{\rm 2}School of Artificial Intelligence, University of Chinese Academy of Sciences\\
    \{zhouguangchong2021, zhangzeren2021, guoliang.fan\}@ia.ac.cn
}


\begin{document}

\maketitle

\begin{abstract}
Exploration in cooperative multi-agent reinforcement learning (MARL) remains challenging for value-based agents due to the absence of an explicit policy. Existing approaches include individual exploration based on uncertainty towards the system and collective exploration through behavioral diversity among agents. However, the introduction of additional structures often leads to reduced training efficiency and infeasible integration of these methods. In this paper, we propose Adaptive exploration via Identity Recognition~(AIR), which consists of two adversarial components: a classifier that recognizes agent identities from their trajectories, and an action selector that adaptively adjusts the mode and degree of exploration. We theoretically prove that AIR can facilitate both individual and collective exploration during training, and experiments also demonstrate the efficiency and effectiveness of AIR across various tasks.
\end{abstract}

%
\begin{links}
     \link{Code}{https://github.com/Jugg1er/AIR}
\end{links}

\section{Introduction}
Multi-agent reinforcement learning~(MARL) has achieved outstanding results in complex cooperative tasks, such as flocking control~\citep{xu2018multi,gu2023safe}, autonomous driving~\citep{shamsoshoara2019distributed,zhang2023spatial}, and sensor network~\citep{zhao2023optimizing}. To deal with the partial observability and the vast joint spaces of agents, most MARL methods follow the \textit{centralized training and decentralized execution} (CTDE) paradigm. Especially, value-based approaches~\cite{sunehag2017value,rashid2018qmix,yang2020qatten,wang2020qplex} enjoy high sample efficiency and great scalability, ultimately achieving superior performance in popular benchmarks. However, most value-based methods suffer from insufficient exploration due to the adoption of the vanilla $\epsilon$-greedy exploration strategy, and how to effectively enhance the exploration of value-based agents in cooperative tasks remains a challenging problem.

In some studies, the exploration of an agent is guided by its uncertainty towards the system. \citet{zhang2023self} define the uncertainty as the correlation between an agent's action and other agents' observations, based on which they adjust exploration rate $\epsilon$ dynamically. EITI \& EDTI~\cite{wang2019influence} and EMC~\cite{zheng2021episodic} refer to the concept of ``curiosity'', which is measured by the model prediction error in practice and used as an intrinsic reward to facilitate exploration. These approaches introduce perturbations into individual policies but do not explicitly consider coordination, and we call them individual exploration.

Compared to single-agent domain, multi-agent cooperative tasks typically require various skills and have vastly larger policy space, rendering comprehensive exploration infeasible for a single agent. An ideal solution is to keep the diversity of agents' behaviors, which enables individuals to acquire distinct skills, explore disparate regions of the policy space, and share their experiences by parameter sharing technique. Recently, a group of methods revealed the importance of behavioral diversity in collective exploration. As shown in Figure~\ref{fig:sim}, all agents learn to compete for the ball and form terrible cooperation, while the globally optimal policy in Figure~\ref{fig:div} induces different behaviors among agents for tacit cooperation~\cite{li2021celebrating}. Such sub-optimal situations arise exclusively in multi-agent tasks and can not be alleviated through individual exploration. MAVEN~\cite{mahajan2019maven} proposes committed exploration that learns a diverse ensemble of monotonic approximations with the help of a latent space to explore. CDS~\cite{li2021celebrating} maximizes an information-theoretical regularizer to produce diverse individualized behaviors for extensive exploration. In role-based methods~\cite{wang2020roma,wang2020rode,yang2022ldsa,zhou2023sora}, agents are required to deal with distinct subtasks based on their assigned roles, thereby reducing conflicts and accelerating the formation of cooperation.

\begin{figure}[t]
    \centering
    \subfigure[Sub-optimal policy.]{\includegraphics[width=0.44\linewidth]{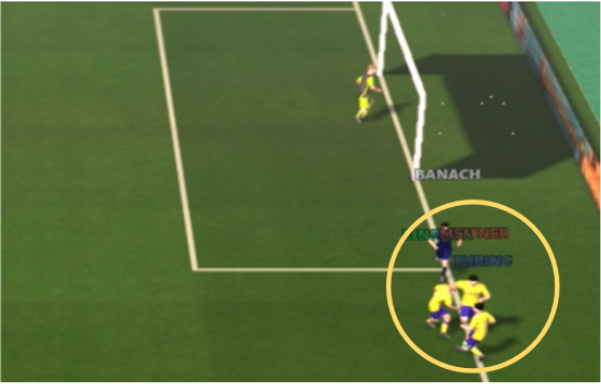}\label{fig:sim}}
    \subfigure[Optimal policy.]{\includegraphics[width=0.44\linewidth]{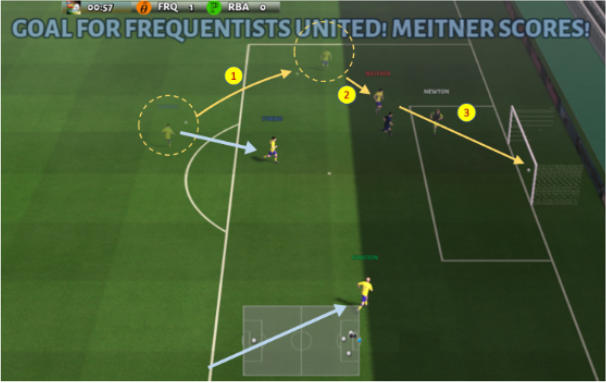}\label{fig:div}}
    \caption{The importance of behavioral diversity in Google Research Football. (a) Agents all compete for the ball, exhibiting homogeneous behaviors and poor coordination. (b) Agents behave differently to achieve coordination.}
    \label{fig:football}
\end{figure}

Both individual skills and group collaboration are crucial in cooperative tasks. However, even without methodological conflicts, a direct integration of the above individual and collective exploration methods is infeasible, as excessive extra modules make the overall model too heavy to train. To the best of our knowledge, no research has integrated individual and collective exploration into a unified framework. To address this gap, we propose Adaptive exploration via Individual Recognition~(AIR). We first learn an identity classifier that distinguishes corresponding agents based on given trajectories. Based on the classifier, we conduct theoretical analysis and design a unified exploration mechanism that can switch the exploration mode between individual and collective exploration according to the training stage as well as tuning the degree of exploration. Our main contributions can be summarized as follows:
\begin{itemize}
    \item We analyze the impact and limitations of traditional exploration strategies within multi-agent systems and reveal that multi-agent exploration accounts for both individual behaviors and the joint behaviors of agents. 
    \item We propose a novel multi-agent exploration framework for value-based agents called AIR, which enables dynamic adjustments of exploration during training. To the best of our knowledge, we are the very first to address individual and collective exploration in a single unified framework.
    \item Our proposed method incorporates only the minimum additional structure and is computationally efficient. Experiments demonstrate the remarkable effectiveness of AIR across various multi-agent tasks.
\end{itemize}

\section{Related Works}
\subsection{Value-based MARL Methods}
Independent Q-learning~\cite{tampuu2017multiagent} directly migrates DQN into multi-agent tasks, but it completely ignores the interactions between agents, leading to various training issues. Value decomposition factorizes the global value into individual ones and alleviates the instability during training. VDN~\cite{sunehag2017value} and QMIX~\cite{rashid2018qmix} aggregate individual Q-values into a global one through additive and monotonic functions respectively. Qatten~\cite{yang2020qatten} replaces the dense network in QMIX with the attention mechanism. To further enhance the capability of the mixing network to express richer joint action-value function classes, QTRAN~\cite{DBLP:journals/corr/abs-1905-05408} constructs two soft regularizations to align the greedy action selections between the joint and individual value functions. Weighted-QMIX~\cite{rashid2020weighted} introduces a weighting mechanism into the projection of monotonic value factorization to place more importance on better joint actions. QPLEX~\cite{wang2020qplex} proposes a duplex dueling network architecture to loose the monotonicity constraints in QMIX. QPro~\cite{mei2024projection} casts the factorization problem as regret minimization over the projection weights of different state-action values.

\subsection{Exploration for Value-based Agents in MARL}
Despite their success, the simple $\epsilon$-greedy exploration strategy used in the above methods has been found ineffective in solving coordination tasks with complex state and reward transitions. Some studies adjust the intensity of individual exploration dynamically according to the metric of uncertainty towards the system. SMMAE~\cite{zhang2023self} refers to the correlation between each agent's action and others' observations. EITI \& EDTI~\cite{wang2019influence} quantify and characterize the influence of one agent’s behavior on others using mutual information and the difference of expected returns respectively, while EMC~\cite{zheng2021episodic} uses prediction errors of individual Q-values to capture the novelty of states and the influence from other agents. 

Other studies have realized the importance of behavioral diversity in exploration in multi-agent tasks. MAVEN~\cite{mahajan2019maven} uses a hierarchical policy to produce a shared latent variable and learns several state-action value functions for each agent to explore different trajectories. CDS~\cite{li2021celebrating} proposes a novel information-theoretical objective to maximize the mutual information between agents’ identities and trajectories to encourage diverse individualized behaviors. Role-based methods~\cite{wang2020roma,wang2020rode,yang2022ldsa,zhou2023sora} decompose the complex task to subtasks, in which each agent search in a sub-region of the entire observation-action space, thereby reducing training complexity. The drawback of these methods lies in their introduction of additional modules to the policy network, which increases the complexity of the decision-making process and consequently reduces both training and execution efficiency.

\section{Preliminaries}
\subsection{Dec-POMDP}
A fully cooperative multi-agent system (MAS) is typically represented by a decentralized partially observable Markov decision process (Dec-POMDP) \cite{oliehoek2016concise}, which is composed of a tuple $G = \langle\mathcal{S},\boldsymbol{\mathcal{U}}, \mathcal{P}, \mathcal{Z}, r, \mathcal{O}, n, \gamma\rangle$. At each time-step, the current global state of the environment is denoted by $s \in \mathcal{S}$, while each agent $a \in \mathcal{A} := \{1, \ldots, n\}$ only receives a unique local observation $z_a \in \mathcal{Z}$ generated by the observation function $\mathcal{O}(s, a): \mathcal{S} \times \mathcal{A} \rightarrow \mathcal{Z}$. Subsequently, every agent $a$ selects an action $u_a \in \mathcal{U}$, and all individual actions are combined to form the joint action $\boldsymbol{u}=[u_1, \ldots, u_n] \in \boldsymbol{\mathcal{U}} \equiv \mathcal{U}^n$. The interaction between the joint action $\boldsymbol{u}$ and the current state $s$ leads to a change in the environment to state $s^{\prime}$ as dictated by the state transition function $\mathcal{P}(s'|s, \boldsymbol{u}): \mathcal{S} \times \mathcal{U} \times \mathcal{S} \rightarrow [0, 1]$. All agents in the Dec-POMDP share the same global reward function $r(s, \boldsymbol{u}): \mathcal{S} \times \boldsymbol{\mathcal{U}} \rightarrow \mathbb{R}$, and $\gamma \in [0, 1)$ represents the discount factor.

\subsection{Value Decomposition}
Credit assignment is a key problem in cooperative MARL which helps each agent to discern its contribution to the collective performance. To solve this, value decomposition~(VD) methods assume that each agent has a specific value function, and the integration of all individual functions creates the joint value function. To guarantee that the optimal action for each agent aligns with the global optimal joint action, most value decomposition methods satisfy the \textit{Individual Global Max} (IGM) assumption:
\begin{equation*}
    \arg \max _{\boldsymbol{u}} Q_{tot}(\boldsymbol{\tau}, \boldsymbol{u})=\arg \max _{u_a} Q_a(\tau_a, u_a),\quad \forall a\in \mathcal{A}
\end{equation*}
where $Q_{tot}=f(Q_1, ..., Q_n)$, $Q_1, ..., Q_n$ denote the individual Q-values, and $f$ is the mixing function. Various value decomposition methods, including VDN~\cite{sunehag2017value}, QMIX~\cite{rashid2018qmix} and so on, arise from variations in the construction of $f$. The model is commonly learnt by the temporal difference (TD) error below, in which $Q_{tot}^-$ is a periodically updated target network of $Q_{tot}$.
\begin{equation*}
    \mathcal{L}_{\textrm{TD}}=\left[r_t + \gamma \max_{\boldsymbol{u}_{t+1}}Q_{tot}^-(\boldsymbol{\tau}_{t+1}, \boldsymbol{u}_{t+1})-Q_{tot}(\boldsymbol{\tau}_{t}, \boldsymbol{u}_{t})\right]^2
\end{equation*}

\section{Method}
Compared to single-agent reinforcement learning, exploration in MARL also necessitates behavioral diversity across agents. In this section, we introduce a novel strategy called Adaptive exploration via Individual Recognition~(AIR), which integrates individual and collective exploration into a unified framework. We first formulate a discrepancy metric of trajectories to measure the behavioral diversity among different agents and obtain the learning objective for collective exploration by theoretical derivation. With this objective, we further design an adversarial strategy to stimulate each agent to deviate from a fixed policy and explore novel actions. Finally, we integrate the collective and individual exploration into a unified framework.

\subsection{Diverse Behaviors across Agents}
We begin the derivation of AIR by defining the concept of \textit{trajectory visit distribution}:
\begin{definition}[Trajectory visit distribution]
    In a multi-agent system, the distribution of the trajectory experienced by the agent $k$ is called individual trajectory visit distribution and is defined as:
    \begin{align}
        \rho^k(\tau_T, u_T)&=\rho(\tau_T, u_T|\textrm{agent\_id}=k) \nonumber\\
        &=\rho(o_0, u_0, o_1, u_1, ..., o_T, u_T | z_k) \nonumber\\
        &=\prod\limits_{t=0}^T \pi^k(u_t | o_t) \sum\limits_{s_t} P(s_t) O(o_t|s_t, k),
        \label{eq:rho_k}
    \end{align}
    where $P(s_t)$ is the probability of the state being $s_t$ at time step $t$, $O(o_t|s_t, k)$ is the probability of agent $k$ receiving observation $o_t$ under state $s_t$, and $\pi^k$ is the policy of agent $k$. The trajectory visit distribution of the overall system containing $n$ agents is called system trajectory visit distribution and is defined as:
    \begin{equation}
        \rho = \frac{1}{n} \sum\limits_{k=1}^n \rho^k .
        \label{eq:rho}
    \end{equation}
\end{definition}

The trajectory visit distribution, which indicates the probability of encountering a specific trajectory with a given policy from initial state $s_0$, encapsulates the diversity of policies naturally. Therefore, the diverse behaviors of agents can be assessed by measuring the diversity of trajectories generated by distinct individual policies. The Kullback-Leibler~(KL) divergence is a commonly used metric for comparing the discrepancy between distributions, by which we define the difference between an agent's policy and other agents' policies as below:
\begin{definition}[Difference between individual policies]
    In a multi-agent system with $N$ agents, the difference between the policy of agent $k$ and the policies of other agents can be defined by the discrepancy of their generated trajectories:
    \begin{equation}
        \mathcal{D}_{\textrm{KL}}\left[\rho^k || \rho\right]=\mathcal{D}_{\textrm{KL}}\left[\rho(\tau, u|z_k) || \rho(\tau, u)\right]
        \label{eq:dbp}
    \end{equation}
\end{definition}

According to Equation~\ref{eq:dbp}, we improve the difference between an individual trajectory visit distribution and the system trajectory visit distribution to enforce diverse behaviors of agents. However, it is completely infeasible to estimate $\rho(\tau, u)$ directly due to an intractably huge input space. We notice that Equation~\ref{eq:dbp} can be expressed by the form of entropy discrepancy during training:
\begin{equation}
    \mathbb{E}_{z}\left[\mathcal{D}_{\textrm{KL}}\left(\rho(\tau, u|z) || \rho(\tau, u)\right)\right] = \mathcal{H}(\rho)-\mathcal{H}(\rho | z),
    \label{eq:ent}
\end{equation}

\noindent in which $\mathcal{H}(\rho)$ is the Shannon Entropy of trajectory visit distribution. The proof of Equation~\ref{eq:ent} can be found in Appendix~\ref{sec:lemma1}. With the mutual information theory, it can be further transformed into:
\begin{equation}
    \mathcal{H}(\rho)-\mathcal{H}(\rho | z) = \mathcal{H}(z)-\mathcal{H}(z | \rho)
\end{equation}

This optimization target is easier to solve in practice. As each agent $k$ has the same number of sampled trajectories in a training step, $\rho(z=k)$ follows a uniform distribution and thus $\mathcal{H}(z)=-\sum\limits_{k=1}^n \frac{1}{n} \log \frac{1}{n}=\log n$ is a constant. The second term $\mathcal{H}(z | \rho)$ implies that the identity of an agent can be inferred given its trajectory and selected action. We approximate true posterior $\rho(z|\tau, u)$ with a learned discriminator $q_{\zeta}(z| \tau, u)$ and optimize it using the evidence lower bound~(ELBO).
\begin{align}
    \mathcal{H}(z)-\mathcal{H}(z | \rho) &=\mathbb{E}_{\tau, u, z}\left[\log \rho(z| \tau, u)\right] + \log n\nonumber \\
    &= \mathbb{E}_{\tau, u}\left[\mathcal{D}_{\textrm{KL}}\left(\rho(z|\tau, u)||q_{\zeta}(z|\tau,u)\right)\right] \nonumber \\
    &+ \mathbb{E}_{\tau, u, z}\left[\log q_{\zeta}(z|\tau,u)\right] + \log n \nonumber \\
    & \geq \mathbb{E}_{\tau, u, z}\left[\log q_{\zeta}(z|\tau,u)\right] + \log n,
\end{align}

\noindent where we exploit the non-negativity of KL divergence. Therefore, with the sub-goal of promoting behavioral diversity, the policy of each agent $k$ can be expressed as below:
\begin{equation}
    u^* = \arg \max_{u} \left[Q^k (\tau, u) + \alpha \log q_{\zeta}(z_k|\tau, u)\right],
    \label{eq:ce}
\end{equation}

\noindent where $\alpha$ is a positive temperature controlling the degree of collective exploration. For more precise approximation, we further tighten the lower bound by minimizing $\mathcal{D}_{\textrm{KL}}\left(\rho(z|\tau, u)||q_{\zeta}(z|\tau,u)\right)$ and derive the gradient of $q_{\zeta}$ as shown in Equation~\ref{eq:J_zeta}.
\begin{align}
    &\nabla_{\zeta} \mathbb{E}_{\tau, u}\left[\mathcal{D}_{\textrm{KL}}\left(\rho(z|\tau, u)||q_{\zeta}(z|\tau, u)\right)\right] \nonumber \\
    =& -\mathbb{E}_{\tau,u} \left[\nabla_{\zeta}\log q_{\zeta}(z| \tau, u)\right].
    \label{eq:J_zeta}
\end{align}

\subsection{Individual Exploration of Value-based Agents}
In essence, exploration at the individual level drives deviation from established policies, prompting agents to take unfamiliar and unknown actions. A common individual exploration strategy used in policy gradient methods is to add the entropy of policy $\pi_\theta$ as a regularization term in the optimization target $J(\theta)$. While it can adjust the degree of exploration dynamically during training, this approach does not work for VD methods because the value-based agent makes decision according to its value estimations towards the given observation and possible actions instead of an explicit policy distribution.

To facilitate exploration for VD methods, we introduce a novel adversarial mechanism somewhat similar to generative adversarial networks~(GAN)~\cite{goodfellow2014generative}. Under the CTDE framework, we introduce two adversarial components: a centralized identity classifier trained to identify the agent responsible for generating the given trajectory-action pair (we directly reuse the discriminator $q_{\zeta}$ for collective exploration in practice), and a decentralized action selector that attempts to mislead the classifier. This adversarial mechanism can be interpreted by the following relationship (the proof is in Appendix~\ref{sec:lemma3}):
\begin{equation}
    p(z_k|\tau_T, u_T)=\frac{\prod\limits_{t=0}^T \pi^k(u_t|o_t)}{\prod\limits_{t=0}^T \pi^k(u_t|o_t) + \sum\limits_{\substack{i=1\\ i\neq k}}^n \prod\limits_{t=0}^T \pi^i(u_t|o_t)}
    \label{eq:relation}
\end{equation}

According to Equation~\ref{eq:relation}, the higher the probability that an action $u_t$ in the trajectory is selected by the policy, the higher the likelihood that the classifier will correctly identify the source agent $k$ of the trajectory, and vice versa. Thus, even without an explicit policy function, we can leverage the classifier to measure action selection probabilities and encourage the agent to explore low-probability actions, achieving a similar effect to the entropy-based approach. Without distorting the individual Q-value estimation, the action selector promotes the agent $k$ to choose unexplored actions with low posterior probabilities $q_{\zeta}(z_k|\tau, u)$. We formally describe how the action selector works as:
\begin{align}
    u&=\arg \max_u\tilde{Q}^k(\tau, u)\nonumber \\
    &=\arg \max_u \left[ Q^k(\tau, u) - \alpha \log q_{\zeta}(z_k|\tau, u)\right]
\label{eq:ie}
\end{align}

Equation~\ref{eq:ie} has a very similar form to Equation~\ref{eq:ce}, and we take Equation~\ref{eq:ie} as the standard form. The value estimations of less exploited actions are raised more to increase their chances of being chosen when $\alpha >0$, thereby enhancing the individual exploration of the agent. When $\alpha <0$, it turns to collective exploration to induce diverse and cooperative behaviors among agents.

\subsection{Adaptive Temperature}
In Equation~\ref{eq:ie}, the tuning of temperature value $\alpha$ is closely related to the magnitude of Q-value estimation, which is further determined by the magnitude of received rewards that differs not only across tasks but also over the training process due to the changing policy. Therefore, choosing the optimal temperature $\alpha^*$ is non-trivial. Besides, forcing the temperature to a fixed value during training is a poor solution, since the mode and degree of exploration are expected to be adjusted adaptively at different training stages. To dynamically tune the temperature during training, we formulate the learning target as a constrained optimization problem for the agent $k$:
\begin{align}
    &\max_{\pi_{0:T}^k} \mathbb{E}_{\rho} \left[\sum\limits_{t=0}^T r(\boldsymbol{\tau}_t, \boldsymbol{u}_t)\right] \equiv \max_{\pi_{0:T}^k} \mathbb{E}_{\rho^k} \left[\sum\limits_{t=0}^T r(\tau_t, u_t)\right] \nonumber \\
    &\textrm{s.t.} \; \mathbb{E}_{(\tau_t, u_t) \sim \rho^k} \left[-\log q_{\zeta}(z_k|\tau_t, u_t)\right] \geq \mathcal{H}\quad \forall t,
    \label{eq:constraint}
\end{align}

\noindent where $r(\boldsymbol{\tau}_t, \boldsymbol{u}_t)$ is the global reward function and $r(\tau_t, u_t)$ is the factored individual reward at time step $t$ under IGM assumption, and $\mathcal{H}$ is a minimum expected entropy. Since the policy and trajectory at $t$ would affect the future calculation, we rewrite the objective in a recursive form to perform dynamic programming backward:
\begin{equation}
    \max_{\pi_0^k}\left(\mathbb{E}\left[r(\tau_0, u_0)\right] + \max_{\pi_1^k}\left(\mathbb{E}\left[...\right] + \mathbb{E}_{\pi_T^k}\left[r(\tau_T, u_T)\right]\right)\right),
    \label{eq:max}
\end{equation}

\noindent subjected to the same constraint in Equation~\ref{eq:constraint}. To focus on the temperature, we turn to the dual problem instead. We start with the last time step $T$. With the constraint in Equation~\ref{eq:constraint},
\begin{align}
    \label{eq:dual}
    &\max_{\pi_T^k} \mathbb{E}_{(\tau_T, u_T)\sim \rho^k} \left[r(\tau_T, u_T)\right] \\
    = &\min_{\alpha_T \geq 0} \max_{\pi_T^k}\mathbb{E}\left[r(\tau_T, u_T)-\alpha_T \log q_{\zeta}(z_k|\tau_T, u_T)\right]-\alpha_T \mathcal{H}, \nonumber
\end{align}

\noindent where $\alpha_T$ is the dual variable. As $\pi_T^k$ is in the form of Equation~\ref{eq:ie} and only related to $q_{\zeta}$ when the Q-value function is fixed, we can replace $\pi_T^k$ in the subscripts of Equation~\ref{eq:dual} with $q_{\zeta}$. Here we employ strong duality since the objective is linear and the constraint in Equation~\ref{eq:constraint} is a convex function with respect to $q_{\zeta}$. With the correspondence between the optimal policy deduced by $q_{\zeta}^*$ and the temperature value $\alpha_T$, the optimal dual variable $\alpha_T^*$ is solved by
\begin{equation}
    \arg \max_{\alpha_T} \mathbb{E}_{(\tau_t, u_t)\sim q_{\zeta}^*} \left[\alpha_T \log q_{\zeta}(z_k|\tau_T, u_T)+\alpha_T \mathcal{H}\right].
\end{equation}

To simplify notation, we refer to the soft Bellman equation~\cite{haarnoja2017reinforcement}
\begin{align}
    &Q^*(\tau_t, u_t) \\
    =&r(\tau_t, u_t) + \mathbb{E}_{\rho^k}\left[Q^*(\tau_{t+1}, u_{t+1})-\alpha \log q_{\zeta}(z_k|\tau_{t+1}, u_{t+1})\right] \nonumber
\end{align}

\noindent with $Q_T^*(\tau_T, u_T)=\mathbb{E}[r(\tau_T, u_T)]$. We now take Equation~\ref{eq:max} a step further using the dual problem and get:
\begin{align}
    & \max_{\pi_{T-1}^k}\left(\mathbb{E}[r(\tau_{T-1}, u_{T-1})]+\max_{\pi_T^k}\mathbb{E}[r(\tau_T, u_T)]\right) \nonumber \\
    =& \max_{\pi_{T-1}^k} \left(Q^*(\tau_{T-1}, u_{T-1})-\alpha_T \mathcal{H}\right) \\
    =& \min_{\alpha_{T-1}\geq 0} \max_{T-1}\big (\mathbb{E}\left[Q^*(\tau_{T-1}, u_{T-1})\right] \nonumber \\
    -& \mathbb{E}\left[\alpha_{T-1} \log q_{\zeta}(z_k|\tau_{T-1}, u_{T-1})\right]-\alpha_{T-1}\mathcal{H}\big ) + \alpha_T^* \mathcal{H}. \nonumber
\end{align}

Thus Equation~\ref{eq:constraint} can be optimized recursively from back to front. After obtaining the optimal Q-function $Q^*$ and the corresponding optimal policy $\pi_t^*$, the optimal dual variable $\alpha_t^*$ can be solved by:
\begin{equation}
    \alpha_t^* = \arg \max_{\alpha_t} \mathbb{E}_{u_t\sim \pi_t^*} \left[\alpha_t \log q_{\zeta}(z_k|\tau_t, u_t)+\alpha_t \mathcal{\bar{H}}\right].
    \label{eq:optimal_temp}
\end{equation}

Equation~\ref{eq:optimal_temp} provides a theoretical approach for calculating the optimal temperature, but it is not feasible in practice since we can not obtain $Q^*$ directly but iteratively update the Q-function to approximate it. Therefore, we refer to dual gradient descent~\cite{boyd2004convex}, which alternates between finding the optimal values of primal variables with fixed dual variables and applying gradient descent on dual variables for one step. It is also noted that while optimizing with respect to the primal variables fully is impractical, a truncated version that performs incomplete optimization (even for a single gradient step) can be shown to converge under convexity assumptions. Although the assumptions are not strictly satisfied, \citet{haarnoja2018soft} still extended the technique to neural networks and found it working in practice. Thus, we compute gradients for $\alpha$ according to the objective in Equation~\ref{eq:J_alpha}. $\mathcal{\bar{H}}$ is a hyperparameter and is calculated as the running mean of $-\log q_{\zeta}(z_k|\tau_t, u_t)$ in practice. The overall algorithm is displayed in Algorithm~\ref{alg:AIR}.
\begin{equation}
    J(\alpha)=\mathbb{E}_{u_t \sim \pi_t^k} \left[\alpha \log q_{\zeta}(z_k|\tau_t, u_t) + \alpha \mathcal{\bar{H}}\right] 
    \label{eq:J_alpha}
\end{equation}

\begin{algorithm}[tb]
\caption{Adversarial Identity Recognition~(AIR)}
\label{alg:AIR}
\textbf{Parameters}: Value decomposition framework $\theta$, target network $\theta^-=\theta$, identity classifier $\zeta$, initial temperature $\alpha_0$.
\begin{algorithmic}[1] 
\STATE Initialize an empty replay buffer $\mathcal{D}$.
\WHILE{\textit{training}}
\FOR {$episode \leftarrow 1$ \TO $M$}
\STATE Initialize $\boldsymbol{E}=\emptyset$.
\FOR {each time step $t$}
\FOR {each agent $k$}
\STATE Select action based on $\tilde{Q}^k$ in Equation~\ref{eq:ie}.
\STATE Store the transition in $\boldsymbol{E}$.
\ENDFOR
\ENDFOR
\STATE Add the episodic data $\boldsymbol{E}$ to $\mathcal{D}$.
\ENDFOR
\STATE Sample a mini-batch $\mathcal{B}$ from $\mathcal{D}$.
\STATE Follow the value decomposition method to update $\theta$,
\STATE $\zeta \leftarrow \zeta + \lambda_{\zeta} \hat{\nabla_{\zeta}}J(\zeta)$,
\STATE $\alpha \leftarrow \alpha + \lambda_{\alpha} \hat{\nabla_{\alpha}}J(\alpha)$,
\STATE Update target network $\theta^- \leftarrow \theta$ periodically.
\ENDWHILE
\end{algorithmic}
\end{algorithm}

\section{Experiments}
In this section, we conduct a large set of experiments to validate the efficiency and effectiveness of AIR across various scenarios. To be consistent with previous methods~\cite{mahajan2019maven,wang2020rode,yang2022ldsa}, we similarly employ QMIX~\cite{rashid2018qmix} as the backbone while using AIR for exploration. Several state-of-the-art MARL methods are selected as baselines, including QMIX~\cite{rashid2018qmix}, QPLEX~\cite{wang2020qplex}, and recognized works about multi-agent exploration (MAVEN~\cite{mahajan2019maven}, RODE~\cite{wang2020rode}, LDSA~\cite{yang2022ldsa}). All the baselines are implemented with the open-source codes in their original papers. We run each algorithm on 5 different random seeds for every task to alleviate the randomness in experiments.

\subsection{StarCraft II Multi-Agent Challenges}
StarCraft II Multi-Agent Challenges~(SMAC)~\citep{samvelyan2019starcraft} is a popular benchmark for cooperative MARL. There are two armies of units in SMAC. Each ally unit is controlled by a decentralized agent that can only act based on its local observation and the enemy units are controlled by built-in handcrafted heuristic rules. The performance of an MARL method is assessed by its test win rates in the SMAC scenarios. In our experiments, the difficulty level of built-in AI is set to 7~(very hard), and the version of StarCraft II engine is 4.6.2 instead of the simpler 4.10. 

We present experiment results on 6 \textit{Hard} and \textit{Super Hard} SMAC scenarios in Figure~\ref{fig:smac_results}. The solid line represents the median win rates during training, with the 25-75\% percentiles being shaded. Although the win rate in \textit{corridor} is slightly lower than RODE, AIR reaches the best performances in 5 out of all 6 scenarios with both the highest win rates and the earliest rise-ups. It is noteworthy that different scenarios necessitate distinct algorithmic capabilities. For example, agents in \textit{corridor} should adopt specialized roles alternately, with some drawing enemies and others focusing fire on enemies, which requires the algorithm to induce diverse agent behaviors. \textit{3s5z\_vs\_3s6z} and \textit{6h\_vs\_8z} are two hardest scenarios that require extensive exploration~\citep{hu2021rethinking}. While other multi-agent exploration methods (MAVEN, RODE and LDSA) only work on \textit{corridor} and completely fail in \textit{3s5z\_vs\_3s6z} and \textit{6h\_vs\_8z}, we are delighted to find that our proposed method exhibits remarkable sample efficiency and effectiveness across scenarios, which strongly demonstrates the superiority of AIR's multi-agent exploration strategy. 

\begin{figure*}[t]
    \centering
    \includegraphics[width=0.95\textwidth]{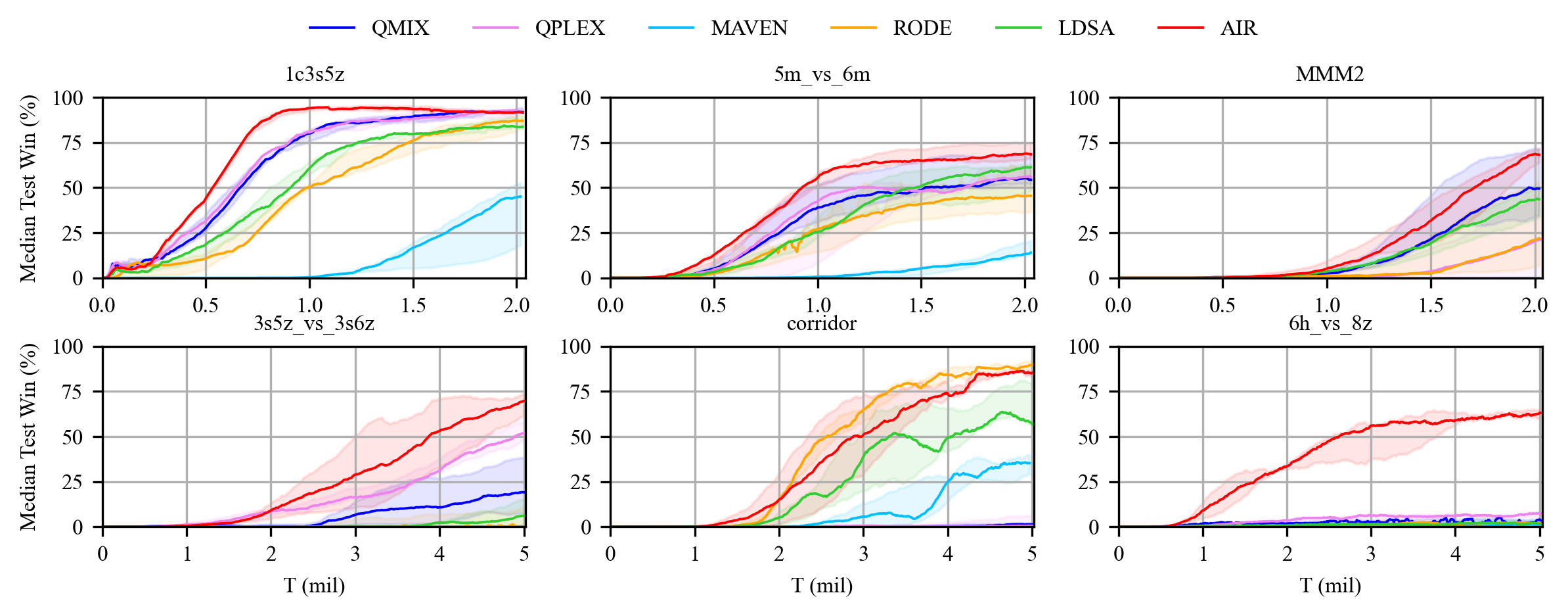}
    \caption{Experiment results of AIR and baselines on SMAC.}
    \label{fig:smac_results}
\end{figure*}

\subsection{Google Research Football}

\begin{figure*}[t]
    \centering
    \includegraphics[width=0.95\textwidth]{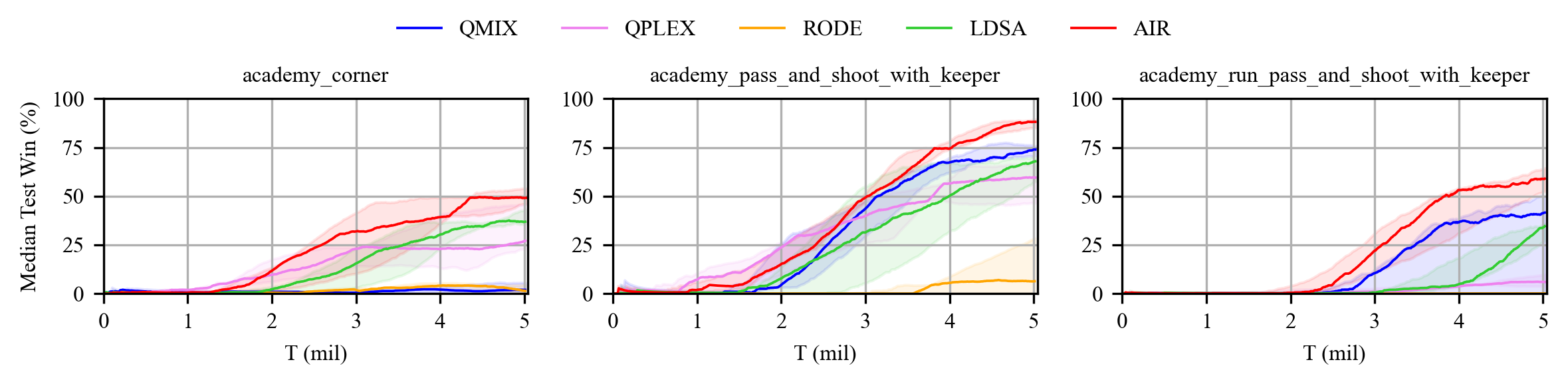}
    \caption{Experiment results of AIR and baselines on GRF.}
    \label{fig:grf_results}
\end{figure*}

Experiments in numerous previous studies have revealed that algorithms claiming state-of-the-art performance often exhibit significant degradation when applied to different environments. To assess the generalizability of AIR to other environments, we conducted further experiments in Google Research Football~(GRF)~\cite{kurach2020google}. GRF is a user-friendly platform for researchers to explore and develop novel RL algorithms with customized scenarios, which necessitates the acquisition of sophisticated strategies involving teamwork, ball control, and tactical decision-making. 

We choose three official scenarios from Football Academy, and the experiment results are displayed in Figure~\ref{fig:grf_results}. \textit{Academy\_pass\_and\_shoot\_with\_keeper} is a relatively easy scenario, in which AIR performs closely with the best baselines. \textit{Academy\_corner} and \textit{academy\_run\_pass\_and\_shoot\_with\_keeper} are two harder scenarios that requires tacit cooperation of agents. We can see that most of the baselines can address only one of the two scenarios, whereas AIR achieves significantly superior performance in both. Although RODE is considered a state-of-the-art method and even outperforms AIR in SMAC \textit{corridor}, it experiences a substantial performance decline in GRF. In contrast, AIR maintains its superiority in both environments. Therefore, we can conclude that AIR not only exhibits exceptional performance across diverse scenarios within the same environment but also maintains its superiority when applied to different environments.

\subsection{Case Study}
\begin{figure}
    \centering
    \includegraphics[width=1\linewidth]{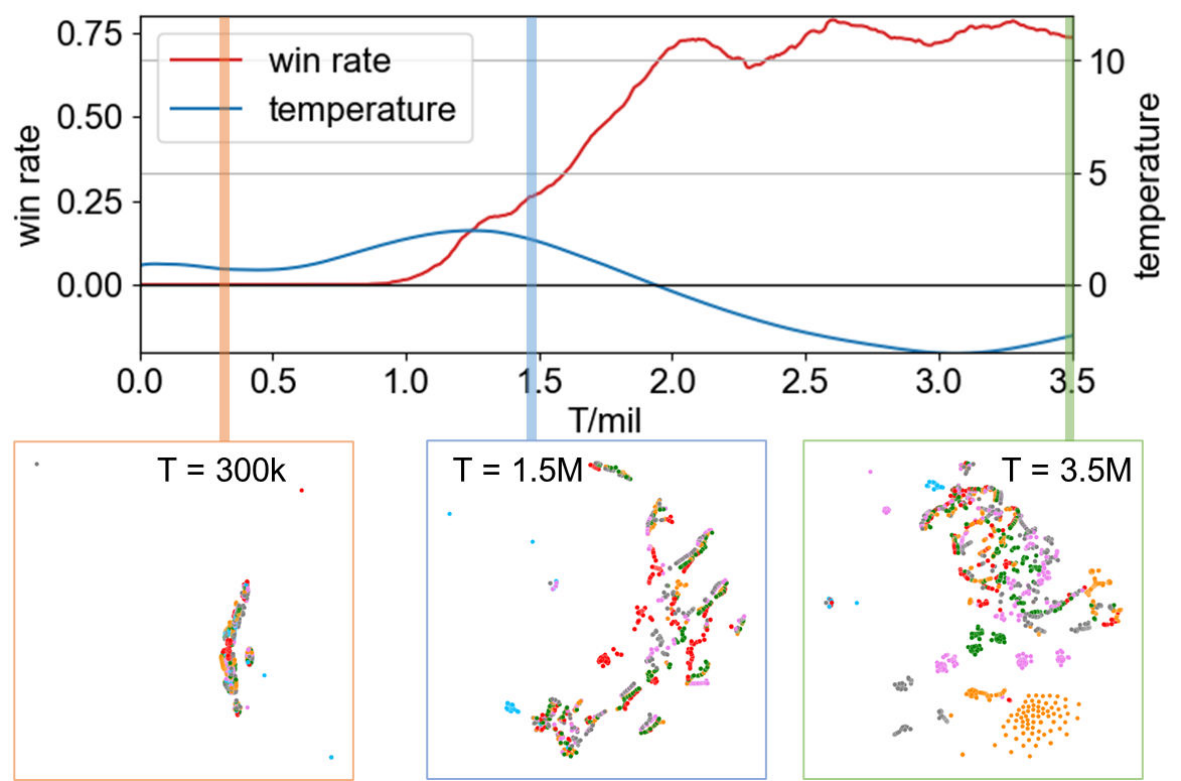}
    \caption{The training session on SMAC \textit{corridor}. \textbf{Up}: The curves that depict the changes of win rate and temperature value during training. \textbf{Down}: The 2D t-SNE visualizations of the agents' trajectories at different training steps.}
    \label{fig:case-study}
\end{figure}

In order to study how AIR adjusts the exploration during training, we dive into a training session conducted on the \textit{corridor} scenario in SMAC. We record the win rate and temperature values during training, and we reduce the high-dimensional observation vectors to 2D space via the t-SNE~\cite{van2008visualizing} algorithm for dimensionality reduction to visualize the distributions of agents' trajectories at different training steps. 

According to Figure~\ref{fig:case-study}, the temperature $\alpha$ is positive during the first half of the training process, thus the individual exploration is enhanced. At $T=300k$, the region visited by the agents is very limited, with few noises distributed away from the cluster. The degree of individual exploration reaches the peak at around $T=1.5M$, when all the agents visit a much wider region than they did at $T=300k$ steps, which confirms an extensive individual exploration between the two timestamps. After 2 million training steps, the temperature turns negative and the exploration mode is switched to collective exploration. At $T=3.5M$, the trajectories of different agents exhibit initial proximity at the beginning of an episode, but subsequently diverge and become readily distinguishable, while at $T=1.5M$ the trajectories of agents are largely intertwined. This suggests an obvious increase in the diversity of agents' behaviors. Therefore, we can conclude that AIR integrates individual and collective exploration and can dynamically adjust the mode and degree of exploration during training.

\subsection{Ablation Study}
To ensure a fair comparison in performance, it is necessary to exclude the effect of model size. We set the model size of QMIX as the baseline ($100\%$) and display the relative model sizes of other algorithms with respect to QMIX in Figure~\ref{fig:air_size}. Despite utilizing an extremely lightweight model, AIR still achieves significant results across tasks, demonstrating its great efficiency and superiority. 

\begin{figure}
    \centering
    \includegraphics[width=0.95\linewidth]{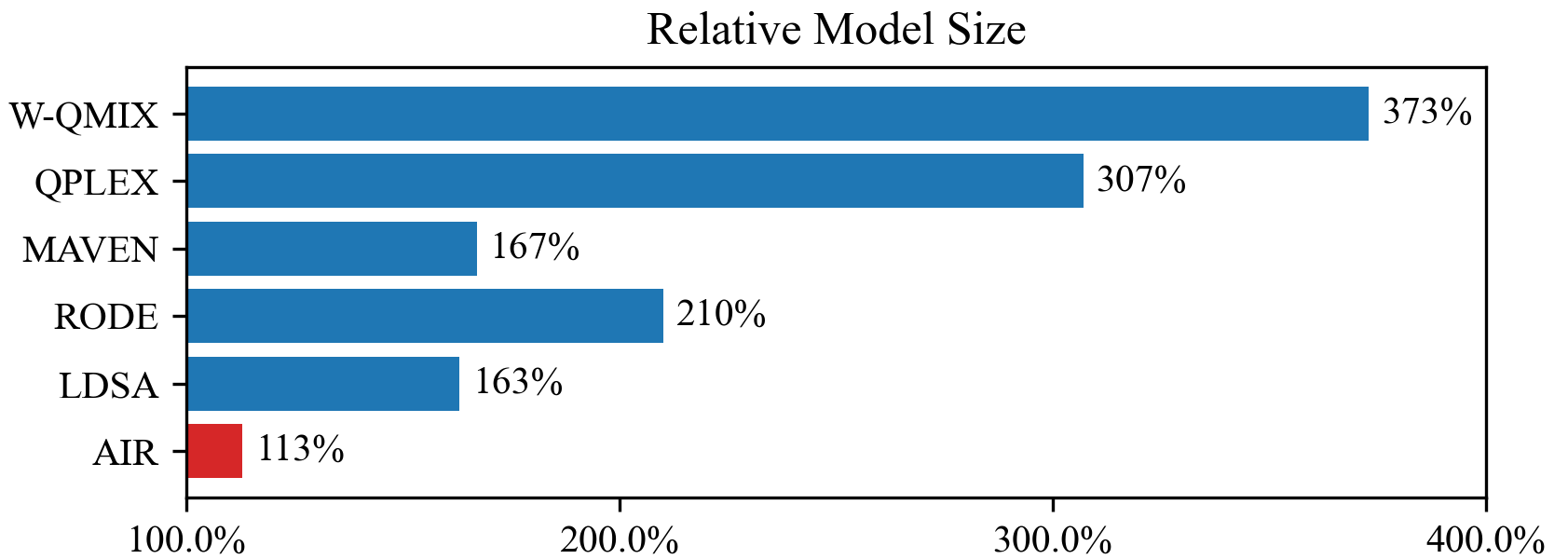}
    \caption{The relative model sizes of algorithms.}
    \label{fig:air_size}
\end{figure}

The primary advantage of AIR over other exploration methods for cooperative MARL lies in its integration of individual and collective exploration within a unified framework, as well as allowing for dynamic adjustment of exploration modes and intensity based on training progress. We first validate the benefits of unifying the two types of exploration and design two variants, AIR-in and AIR-co, which constrain the temperature value to be positive and negative during training, so the mode of exploration is set as \textbf{in}dividual and \textbf{co}llective respectively. We conduct experiments in SMAC \textit{MMM2} and \textit{6h\_vs\_8z} scenarios, and the results are shown in Figure~\ref{fig:air_mode}. When AIR incorporates only a single mode of exploration (AIR-in or AIR-co), its performance is substantially inferior to the original AIR, which integrates both individual and collective exploration modes. Therefore, the integration of individual and collective exploration within a unified framework is a key factor contributing to AIR's superior performance. 

\begin{figure}
    \centering
    \includegraphics[width=0.95\linewidth]{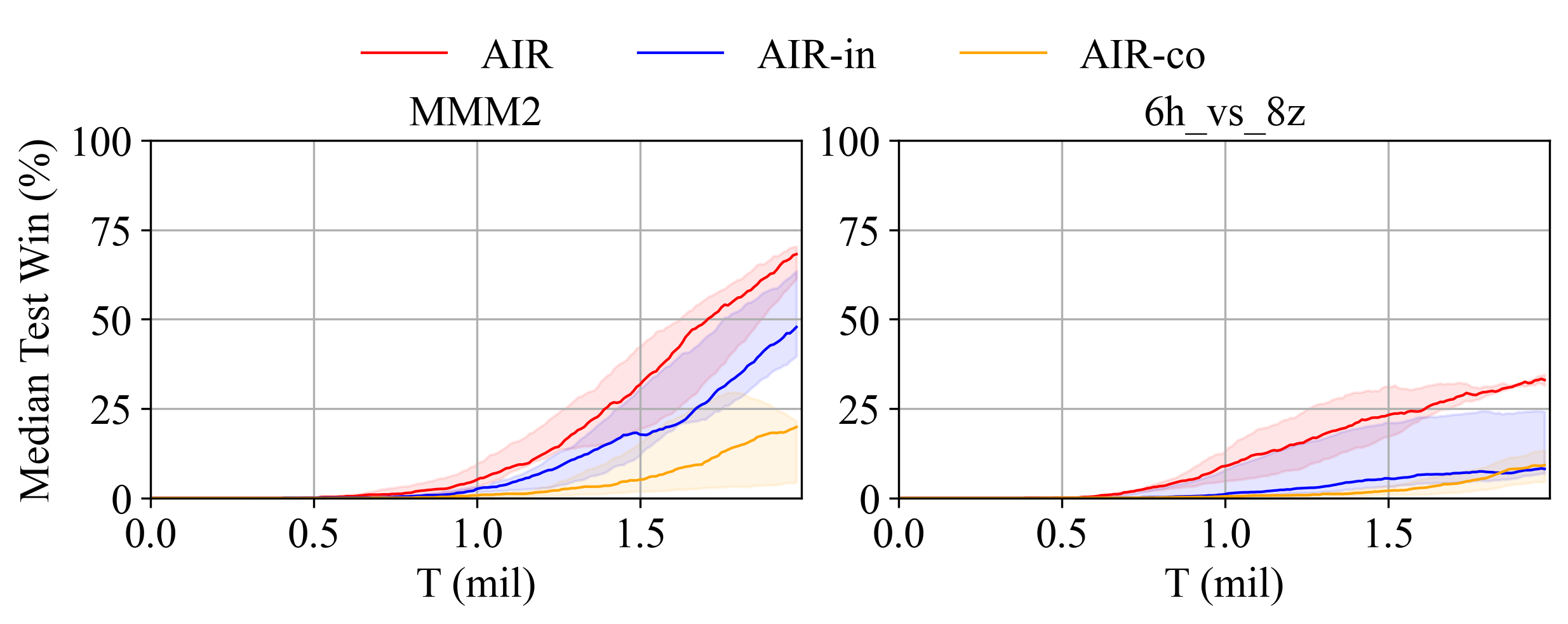}
    \caption{The experiment results of AIR and its variants with distinct exploration modes.}
    \label{fig:air_mode}
\end{figure}

Furthermore, we investigate the impact of dynamically adjusting the exploration degree (i.e., the temperature value) on the training process with a fixed exploration mode. However, selecting a proper temperature value $\alpha$ is non-trivial as mentioned above. Drawing from the study by \citet{hu2021rethinking}, we use QMIX with $\epsilon$-greedy exploration strategy as the control group and adjust the degree of individual exploration by switching the value of $\epsilon$ anneal period between $100K$ and $500K$. As shown in Figure~\ref{fig:air_degree}, with more intensive exploration, QMIX-500K performs better than QMIX-100K in \textit{6h\_vs\_8z}, but it would completely fail in \textit{corridor} as over-exploration ruins the exploitation of agents. Meanwhile, the degree of exploration in AIR can be dynamically adapted to the specific task and training process, rendering the success of AIR in both scenarios. Evidently, the mechanism of AIR for dynamically adjusting exploration intensity is of great superiority.

\begin{figure}
    \centering
    \includegraphics[width=0.95\linewidth]{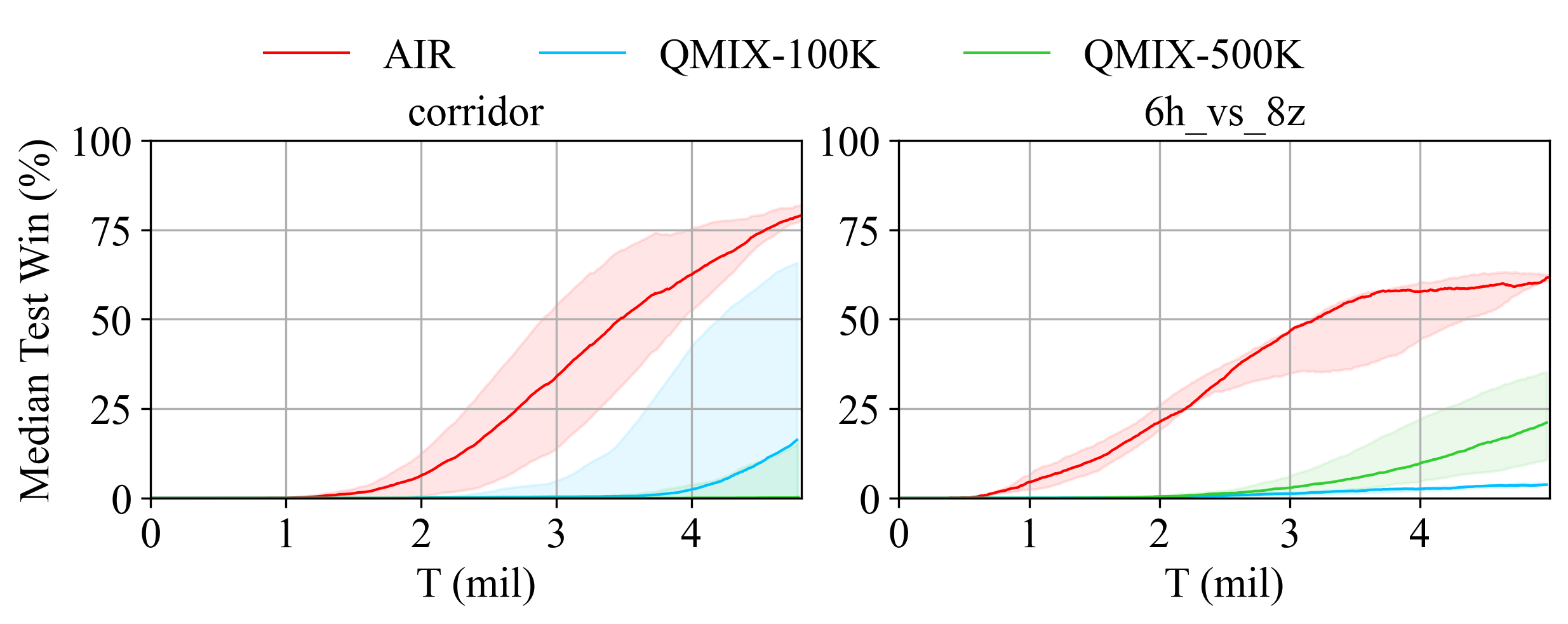}
    \caption{The experiment results of AIR and QMIX variants with varied degrees of individual exploration.}
    \label{fig:air_degree}
\end{figure}
\section{Conclusion}
Exploration for value-based agents in cooperative multi-agent tasks remains a tough problem, partly due to the absence of an explicit policy. Previous works have made some progress mainly through individual exploration based on agent's uncertainty towards the system or collective exploration focusing on agents' behavioral diversity. In this paper, we propose Adaptive exploration via Identity Recognition~(AIR), which is the very first approach to integrate both individual and collective in a unified framework. The core module in AIR is an identity classifier that distinguishes corresponding agents based on given trajectories. According to our theoretical analysis, the accuracy of the classifier is highly related to the exploration of agents. Consequently, we design a unified exploration mechanism that can switch the exploration mode between individual and collective exploration according to the training stage as well as tuning the degree of exploration. We conduct extensive experiments and studies on varied scenarios, and the results strongly demonstrate the superiority of our proposed method.

\bibliography{aaai25}

\newpage
\onecolumn
\appendix
\setcounter{secnumdepth}{2}
\setcounter{theorem}{0}
\section{Theoretical Derivations}   \label{sec:proof}
\subsection{The Decomposition of KL-Divergence}\label{sec:lemma1}
\begin{lemma}
    Given the system trajectory visit distribution $\rho$, of which the entropy $\mathcal{H}(\rho)$ can be decomposed as below:
    \begin{equation*}
        \mathcal{H} = \mathbb{E}_{z}\left[\mathcal{D}_{\textrm{KL}}\left(\rho(\tau, u|z) || \rho(\tau, u)\right)\right] + \mathcal{H}(\rho|z)
    \end{equation*}
\end{lemma}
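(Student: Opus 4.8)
The identity is the trajectory-space instance of the standard relation $I(Z;X)=\mathcal{H}(X)-\mathcal{H}(X\mid Z)=\mathbb{E}_{z}\left[\mathcal{D}_{\textrm{KL}}(p(x\mid z)\,\|\,p(x))\right]$, so the plan is simply to expand the expected KL term and regroup. First I would fix notation: let $(\tau,u)$ range over the (finite, once the horizon $T$ is fixed as in Equation~\ref{eq:rho_k}) set of trajectory--action sequences and let $z$ range over $\{1,\dots,n\}$ with $\rho(z=k)=\tfrac1n$; this uniformity is exactly the sampling assumption stated in the main text, and it gives $\rho(\tau,u)=\sum_{k}\rho(z=k)\,\rho(\tau,u\mid z=k)=\tfrac1n\sum_k\rho^k(\tau,u)$, matching Equation~\ref{eq:rho}.

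Next I would expand the left-hand side directly:
\begin{align*}
\mathbb{E}_{z}\left[\mathcal{D}_{\textrm{KL}}\left(\rho(\tau,u\mid z)\,\|\,\rho(\tau,u)\right)\right]
&=\sum_{z}\rho(z)\sum_{\tau,u}\rho(\tau,u\mid z)\log\frac{\rho(\tau,u\mid z)}{\rho(\tau,u)}\\
&=\sum_{z}\rho(z)\sum_{\tau,u}\rho(\tau,u\mid z)\log\rho(\tau,u\mid z)\\
&\quad-\sum_{z}\rho(z)\sum_{\tau,u}\rho(\tau,u\mid z)\log\rho(\tau,u).
\end{align*}
The first double sum is $-\mathcal{H}(\rho\mid z)$ by the definition of conditional entropy. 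For the second, I would interchange the order of summation and use $\sum_{z}\rho(z)\rho(\tau,u\mid z)=\rho(\tau,u)$ to collapse it to $-\sum_{\tau,u}\rho(\tau,u)\log\rho(\tau,u)=\mathcal{H}(\rho)$. Combining the two pieces yields $\mathbb{E}_{z}[\mathcal{D}_{\textrm{KL}}(\cdot\,\|\,\cdot)]=\mathcal{H}(\rho)-\mathcal{H}(\rho\mid z)$, and rearranging gives the claimed decomposition $\mathcal{H}(\rho)=\mathbb{E}_{z}[\mathcal{D}_{\textrm{KL}}(\rho(\tau,u\mid z)\,\|\,\rho(\tau,u))]+\mathcal{H}(\rho\mid z)$.

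There is no deep obstacle; the computation above is essentially the whole proof. The only points requiring care are (i) the marginalization step $\rho(\tau,u)=\mathbb{E}_{z}[\rho(\tau,u\mid z)]$, which relies on $\rho(z)$ being uniform --- exactly the assumption made in the text --- and (ii) ensuring the sums are absolutely convergent so that the interchange of the $z$- and $(\tau,u)$-summations is legitimate and $\mathcal{H}(\rho)$, $\mathcal{H}(\rho\mid z)$ are well defined; restricting to a fixed finite horizon $T$ (so that every sum is finite) disposes of this. I would state these as standing assumptions and present the display above as the proof.
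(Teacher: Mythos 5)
Your proof is correct and follows essentially the same route as the paper's: both are the standard one-line identity $\mathbb{E}_{z}[\mathcal{D}_{\textrm{KL}}(\rho(\tau,u\mid z)\,\|\,\rho(\tau,u))]=\mathcal{H}(\rho)-\mathcal{H}(\rho\mid z)$, obtained by splitting $-\log\rho(\tau,u)$ into $\log\frac{\rho(\tau,u\mid z)}{\rho(\tau,u)}-\log\rho(\tau,u\mid z)$ and taking expectations (the paper reads the computation starting from $\mathcal{H}(\rho)$, you read it starting from the KL term, but the algebra is identical). Your added remarks on the marginalization step and finiteness of the sums are sound and only make the argument more careful than the paper's.
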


\begin{proof}
    \begin{align*}
        \mathcal{H}(\rho) &= \mathbb{E}_{(\tau, u)\sim \rho}\left[-\log \rho(\tau, u) \right] \\
        &= \mathbb{E}_{(\tau, u, z)\sim \rho} \left[\log \frac{\rho(\tau, u|z)}{\rho(\tau,u)} - \log \rho(\tau, u | z) \right] \\
        &= \mathbb{E}_{z}\left[\mathcal{D}_{\textrm{KL}}\left(\rho(\tau, u|z) || \rho(\tau, u)\right)\right] + \mathcal{H}(\rho|z)
    \end{align*}
\end{proof}

\subsection{Equivalent Optimization Target}
\begin{lemma}
    \begin{equation*}
        \mathcal{H}(z| \rho) \propto -\mathcal{D}_{\textrm{KL}}\left[\rho(\tau, u|z)||\frac{1}{n}\sum\limits_{k=1}^n \rho(s, a|z_k)\right]
    \end{equation*}
\end{lemma}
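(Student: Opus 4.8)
The plan is to derive the identity by combining the decomposition of $\mathcal{H}(\rho)$ from the preceding lemma (equivalently, Equation~\ref{eq:ent}) with the symmetry of mutual information, reading $\propto$ as ``equal up to an additive constant independent of the policies being optimized.'' First I would rewrite that decomposition by moving the conditional-entropy term to the other side, obtaining
\[
\mathbb{E}_{z}\!\left[\mathcal{D}_{\textrm{KL}}\!\left(\rho(\tau,u\mid z)\,\big\|\,\rho(\tau,u)\right)\right]=\mathcal{H}(\rho)-\mathcal{H}(\rho\mid z).
\]

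Next I would apply the standard identity $I(z;(\tau,u))=\mathcal{H}(\rho)-\mathcal{H}(\rho\mid z)=\mathcal{H}(z)-\mathcal{H}(z\mid\rho)$ for the joint law in which the agent index $z$ is drawn and $(\tau,u)\sim\rho^{z}$ conditionally on it. Since each agent supplies the same number of sampled trajectories per training step, $z$ is uniform on $\{1,\dots,n\}$ and $\mathcal{H}(z)=\log n$ is constant; substituting the previous display into this chain gives
\[
\mathcal{H}(z\mid\rho)=\log n-\mathbb{E}_{z}\!\left[\mathcal{D}_{\textrm{KL}}\!\left(\rho(\tau,u\mid z)\,\big\|\,\rho(\tau,u)\right)\right].
\]
Finally I would unfold $\rho(\tau,u)$ via its definition $\rho=\tfrac1n\sum_{k=1}^{n}\rho^{k}=\tfrac1n\sum_{k=1}^{n}\rho(\tau,u\mid z_{k})$ (Equation~\ref{eq:rho}), so that the KL term matches the one in the statement; because the $\log n$ offset is independent of the quantities being optimized, $\mathcal{H}(z\mid\rho)$ is an affine function of $-\mathcal{D}_{\textrm{KL}}[\rho(\tau,u\mid z)\,\|\,\tfrac1n\sum_{k}\rho(\tau,u\mid z_{k})]$, which is the asserted proportionality.

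I do not anticipate a real computational obstacle --- the argument is pure bookkeeping once the three ingredients (the preceding lemma, the mutual-information symmetry, and $\mathcal{H}(z)=\log n$) are lined up. The only point requiring care is notational: the $(s,a)$ written in the statement should be read as the trajectory--action pair $(\tau,u)$ of the trajectory visit distribution defined earlier (so that $\tfrac1n\sum_k\rho(\cdot\mid z_k)=\rho$), the KL divergence carries an implicit expectation over $z$ exactly as in the preceding lemma, and $\propto$ means equality modulo the additive constant $\log n$ rather than literal scalar proportionality. I would state these conventions explicitly at the start of the proof so the chain of equalities is unambiguous.
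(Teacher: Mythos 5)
Your proposal is correct and follows essentially the same route as the paper's proof: both rest on the symmetry of mutual information, the constancy of $\mathcal{H}(z)=\log n$ under uniform sampling of agent identities, and the total-probability expansion $\rho(\tau,u)=\tfrac{1}{n}\sum_{k}\rho(\tau,u\mid z_k)$, with $\propto$ read as equality up to the additive constant $\log n$. Your reading of the statement's $(s,a)$ as the trajectory--action pair $(\tau,u)$ and of the KL term as carrying an implicit expectation over $z$ matches the paper's intent, and invoking the preceding lemma directly (rather than re-expanding the entropies as the paper does) is only a cosmetic simplification.
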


\begin{proof}
    According to the mutual information theory, the MI between the trajectory visit distribution and the agent identity can be calculated as:
    \begin{equation*}
        \mathcal{I}(\rho ; z) = \mathcal{H}(\rho)-\mathcal{H}(\rho|z) = \mathcal{H}(z) - \mathcal{H}(z|\rho)
    \end{equation*}
    
    $\mathcal{H}(z)$ is a constant as mentioned in the paper. Thus we have:
    \begin{align*}
        \mathcal{H}(z|\rho) &= \mathcal{H}(z) + \mathcal{H}(\rho|z) - \mathcal{H}(\rho) \\
        &\propto \mathbb{E}_{(\tau, u, z)\sim \rho}\left[-\log \rho(\tau, u|z)\right] - \mathbb{E}_{(\tau, u)~\rho}\left[-\log \rho(\tau, u)\right] \\
        &= \mathbb{E}_{(\tau,u,z)\sim\rho}\left[-\log(\rho(\tau,u|z))\right]-\int-\rho(\tau,u)\log(\rho(\tau,u))d\tau du \\
        &= \mathbb{E}_{(\tau,u,z)\sim\rho}\left[-\log(\rho(\tau,u|z))\right]-\int-\rho(\tau,u,z)\log(\rho(\tau,u))d\tau du dz \\
        &= \mathbb{E}_{(\tau,u,z)\sim\rho}\left[\log (\rho(\tau,u)-\log (\rho(\tau,u|z)\right]
    \end{align*}
    
    Using total probability theorem, 
    \begin{equation*}
        \rho(\tau, u)=\int \rho(\tau, u|z)p(z)dz=\frac{1}{n}\sum\limits_{k=1}^N \rho(\tau, u|z_k)
    \end{equation*}
    
    Substituting it into the formula above, we obtain:
    \begin{align*}
        \mathcal{H}(z|\rho) &\propto \mathbb{E}_{(\tau,u,z)\sim\rho}\left[\frac{1}{n}\sum\limits_{k=1}^n \rho(\tau, u|z_k)-\log (\rho(\tau,u|z)\right] \\
        &\propto -\mathcal{D}_{\textrm{KL}}\left[\rho(\tau, u|z)||\frac{1}{n}\sum\limits_{k=1}^N \rho(s, a|z_k)\right]
    \end{align*}
    
\end{proof}
\subsection{The Relationship between Identity Classifier and Policy}\label{sec:lemma3}
\begin{lemma}
The identity classifier $q_{\zeta}(z_k|\tau, u)$ can be used to measure action selection probabilities of each agent, as the following relationship holds:
    \begin{equation*}
    p(z_k|\tau_T, u_T)=\frac{\prod\limits_{t=0}^T \pi^k(u_t|o_t)}{\prod\limits_{t=0}^T \pi^k(u_t|o_t) + \sum\limits_{\substack{i=1\\ i\neq k}}^n \prod\limits_{t=0}^T \pi^i(u_t|o_t)}
\end{equation*}
\end{lemma}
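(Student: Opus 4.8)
The plan is to compute the posterior $p(z_k \mid \tau_T, u_T)$ directly via Bayes' rule, using the factorized form of the individual trajectory visit distribution $\rho^k$ established in the \emph{Trajectory visit distribution} definition. First I would write Bayes' rule as
\begin{equation*}
    p(z_k \mid \tau_T, u_T) = \frac{p(\tau_T, u_T \mid z_k)\, p(z_k)}{\sum_{i=1}^n p(\tau_T, u_T \mid z_i)\, p(z_i)},
\end{equation*}
and then invoke the uniform prior $p(z_i) = \tfrac{1}{n}$ for all $i$ (this is exactly the assumption used earlier, that each agent contributes the same number of sampled trajectories per training step), so the priors cancel from numerator and denominator and we are left with a ratio of likelihoods $p(\tau_T, u_T \mid z_i) = \rho^i(\tau_T, u_T)$.

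Next I would substitute the product form from Equation~\eqref{eq:rho_k}, namely $\rho^i(\tau_T, u_T) = \prod_{t=0}^T \pi^i(u_t \mid o_t) \cdot \big(\sum_{s_t} P(s_t) O(o_t \mid s_t, i)\big)$. To obtain the clean statement of the lemma, the observation-emission factors $\sum_{s_t} P(s_t) O(o_t \mid s_t, i)$ must be treated as agent-independent (or else absorbed/cancelled), so that only the policy products $\prod_{t=0}^T \pi^i(u_t \mid o_t)$ survive in both numerator and denominator. After that cancellation, splitting the denominator sum into the $i = k$ term plus the $i \neq k$ terms yields precisely
\begin{equation*}
    p(z_k \mid \tau_T, u_T) = \frac{\prod_{t=0}^T \pi^k(u_t \mid o_t)}{\prod_{t=0}^T \pi^k(u_t \mid o_t) + \sum_{i \neq k} \prod_{t=0}^T \pi^i(u_t \mid o_t)},
\end{equation*}
which is the claim.

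The main obstacle I anticipate is justifying the cancellation of the observation-emission terms. If the observation model genuinely depends on the agent index $i$ (which the notation $O(o_t \mid s_t, i)$ suggests), then strictly speaking these factors do not cancel and the clean formula only holds under an additional homogeneity assumption — e.g., that conditioned on the observed trajectory $\tau_T$ the emission likelihoods coincide across agents, or that the parameter-sharing setup makes the relevant marginals identical. I would state this assumption explicitly (it is the natural one under parameter sharing, since all agents see observations drawn from a common marginal in the symmetric tasks considered) and then the rest of the derivation is routine Bayes-rule bookkeeping. A secondary minor point is being careful that the sum in the denominator of Bayes' rule ranges over all $n$ agents including $k$ itself, which is what produces the self-term in the final denominator; this is just a matter of indexing and requires no real work.
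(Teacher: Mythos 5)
Your proposal is correct and follows essentially the same route as the paper's proof: Bayes' rule with the uniform prior $p(z_i)=\tfrac{1}{n}$, substitution of the factored form of $\rho^i$ from Equation~\ref{eq:rho_k}, and cancellation of the observation-emission factors before splitting off the $i=k$ term. Your one point of added care --- that the cancellation requires the emission factors $\sum_{s_t}P(s_t)O(o_t\mid s_t,i)$ to be agent-independent --- is well taken: the paper silently drops the agent index (writing $\mathcal{O}(o_t\mid s_t)$ in the proof despite defining $O(o_t\mid s_t,k)$), so you have made explicit an assumption the paper leaves implicit.
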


\begin{proof}
    In a mini-batch $\mathcal{B}$, since the number of samples belonging to each agent is the same, we have $p(z_k)=\frac{1}{n}$ for any agent $k$. Combining with the definition of trajectory visit distribution in Equation~\ref{eq:rho_k} and \ref{eq:rho}, we can transform the posterior $p(z_k|\tau_T, u_T)$ as below:

    \begin{align*}
        p(z_k|\tau_T, u_T)&=\frac{p(z_k, \tau_T, u_T)}{p(\tau_T, u_T)} \\
        &=\frac{p(\tau_T, u_T|z_k)\cdot p(z_k)}{\sum\limits_{i=1}^n p(\tau_T, u_T|z_i)\cdot p(z_i)}\\
        &=\frac{1}{n}\cdot \frac{\rho^k(\tau_T, u_T)}{\rho(\tau_T, u_T)}\\
        &=\frac{\prod\limits_{t=0}^T \pi^k(u_t|o_t)\sum\limits_{s_t}P(s_t)\mathcal{O}(o_t|s_t)}{\sum\limits_{i=1}^n \prod\limits_{t=0}^T \pi^i(u_t|o_t)\sum\limits_{s_t}P(s_t)\mathcal{O}(o_t|s_t)}\\
        &=\frac{\prod\limits_{t=0}^T \sum\limits_{s_t}P(s_t)\mathcal{O}(o_t|s_t) \cdot \prod\limits_{t=0}^T \pi^k(u_t|o_t)}{\prod\limits_{t=0}^T \sum\limits_{s_t}P(s_t)\cdot \sum\limits_{i=1}^n \prod\limits_{t=0}^T \pi^k(u_t|o_t)}\\
        &=\frac{\prod\limits_{t=0}^T \pi^k(u_t|o_t)}{\sum\limits_{i=1}^n \prod\limits_{t=0}^T \pi^k(u_t|o_t)}
    \end{align*}
    Separate the term related to agent $k$ in the denominator, we get:
    \begin{equation*}
    p(z_k|\tau_T, u_T)=\frac{\prod\limits_{t=0}^T \pi^k(u_t|o_t)}{\prod\limits_{t=0}^T \pi^k(u_t|o_t) + \sum\limits_{\substack{i=1\\ i\neq k}}^n \prod\limits_{t=0}^T \pi^i(u_t|o_t)}
    \end{equation*}
    As the policies of each agent are relatively independent within a training step, the value of the second term in the denominator is not affected by $\pi^k$. Therefore, there is a positive correlation between the probability of an action in the trajectory being selected by agent $k$ and the likelihood of the trajectory being identified as belonging to agent $k$. As the classifier learns a posterior $q_{\zeta}$ to approximate $p(z_k|\tau_T, u_T)$, we can leverage it to assess action selection probabilities without an explicit policy function.
\end{proof}

\section{Environment Details}
\subsection{SMAC}
SMAC is a simulation environment for research in collaborative multi-agent reinforcement learning (MARL) based on Blizzard's StarCraft II RTS game. It provides various micro-battle scenarios and also supports customized scenarios for users to test the algorithms. The goal in each scenario is to control different types of ally agents to move or attack to defeat the enemies. The enemies are controlled by a heuristic built-in AI with adjustable difficulty level between 1 to 7. In our experiments, the difficulty of the game AI is set to the highest~(the 7th level). The version of StarCraft II is 4.6.2 (B69232) in our experiments, and it should be noted that results from different client versions are not always comparable. Table~\ref{tab:challenges} presents the details of selected scenarios in our experiments.

\begin{table}[h]
\centering
\caption{Information of selected challenges.}
\resizebox{0.75\linewidth}{!}{
\begin{tabular}{lcccc}
\hline
Challenge&Ally Units&Enemy Units&Type&Level of Difficulty\\
\hline
1c3s5z&\makecell[c]{1 Colossi\\3 Stalkers\\5 Zealots}&\makecell[c]{1 Colossi\\3 Stalkers\\5 Zealots}&\makecell[c]{Heterogeneous\\Symmetric}&Hard\\
\hline
5m\_vs\_6m&5 Marines&6 Marines&\makecell[c]{Homogeneous\\Asymmetric}&Hard\\
\hline
3s5z\_vs\_3s6z&\makecell[c]{3 Stalkers\\5 Zealots}&\makecell[c]{3 Stalkers\\6 Zealots}&\makecell[c]{Heterogeneous\\Asymmetric}&Super Hard\\
\hline 
MMM2&\makecell[c]{1 Medivac\\2 Marauders\\7 Marines}&\makecell[c]{1 Medivac\\3 Marauders\\8 Marines}&\makecell[c]{Heterogeneous\\Asymmetric}&Super Hard\\
\hline
corridor&6 Zealots&24 Zerglings&\makecell[c]{Homogeneous\\Asymmetric}&Super Hard\\
\hline
27m\_vs\_30m&27 Marines&30 Marines&\makecell[c]{Homogeneous\\Asymmetric}&Super Hard\\
\hline
\end{tabular}}
\label{tab:challenges}
\end{table}

\subsection{SMACv2} \label{sec:smacv2}
SMACv2 (Ellis et al. 2023) is proposed to address SMAC's lack of stochasticity. In SMACv2, the attack ranges of different unit types are no longer the same. Besides, the field of view of the agents is further restricted to a sector shape rather than a circle. Therefore, the agent receives less information and suffers more severe partial observability problems. The team compositions and agent start positions are generated randomly at the beginning of each episode. These modifications make SMACv2 extremely challenging. It is worth noting that the disparity between the lineups of the two sides can be substantial in some episodes due to the randomness in initialization, so it is impossible for an algorithm to reach a 100\% win rate. The version of the StarCraft II engine is also 4.6.2 (B69232) in our experiments.

\subsection{Google Research Football}
Google Research Football~(GRF)~\citep{kurach2020google} is a simulation football environment developed by Google Research and has been well received by the reinforcement learning community. It provides a user-friendly platform for training and evaluating agents in various areas, such as control, planning, and multi-agent cooperation. We choose three official scenarios from Football Academy. \textit{Academy\_pass\_and\_shoot\_with\_keeper} is a relatively easy scenario, while \textit{academy\_corner} and \textit{academy\_run\_pass\_and\_shoot} require more sophisticated coordination and are much harder. Agents are rewarded when they score a goal or kick the ball to a position close to the goal. Observations of the agent include the relative positions of all other entities. We restrict the football to the opponent's half of the field and stop the episode once the ball enters our half to speed up training.

\section{Implementation Details}
\subsection{Settings of Hyperparameters}    \label{sec:hyper}
We list the hyperparameters of the AIR in Table~\ref{tab:hypers}. The hyperparameters of the baselines in our experiments remain the same as their official implementations.

\begin{table}[h]
\centering
\caption{The hyperparameter settings.}
\begin{tabular}{ll}
\hline
Description&Value\\
\hline
Type of value mixer&QMIX\\
Dimension of hidden states in $q_{\zeta}$&64\\
Dimension of hidden states in RNN&64\\
Dimension of the mixing network&32\\
Dimension of hypernetworks&64\\
Batch size&32\\
Trajectories sampled per run&1\\
Replay buffer size&5000\\
Discount factor $\gamma$&0.99\\
Probability of random action ($\epsilon$)&1.0$\sim$0.05\\
Anneal time for $\epsilon$&50000\\
Type of optimizer&Adam\\
Learning rate for $q_{\zeta}$&0.0005\\
Learning rate for $\alpha$&0.0005\\
Learning rate for agents&0.0005\\
Target network update interval&200\\
\hline
\end{tabular}
\label{tab:hypers}
\end{table}
\subsection{Experiments Compute Resources}  \label{sec:hardware}
We conducted our experiments on a platform with 2 Intel(R) Xeon(R) Platinum 8280 CPU \@ 2.70GHz processors, each with 26 cores. Besides, we use a GeForce RTX 3090 GPU to facilitate the training procedure. The time of execution varies by scenario.

\section{Additional Experiments}    \label{sec:sup_exp}
Due to the space limitations, we include the experiments on SMACv2 here. The details of SMACv2 environment can be found in Appendix~\ref{sec:smacv2}. At the beginning of an episode, the unit type and location of each agent are initialized randomly. Besides, the field of view of the agents is further restricted to a sector shape rather than a circle. Therefore, the agent receives less information and suffers more severe partial observability problems, which increases the demand for additional sources of information. It is worth noting that, due to the random initialization, the power gap between two sides could be too wide for the ally agents to win this episode. Therefore, there exists an upper bound for the win rates.

Since RODE~\citep{wang2020rode} and LDSA~\citep{yang2022ldsa} are two methods about multi-agent exploration and perform relatively well in SMAC (RODE even outperforms AIR in \textit{corridor}), we select them as baselines for further comparison. RODE and LDSA employ the episode runner to collect data, while AIR uses the parallel runner in our code to accelerate the training process but reduce the frequency of network updates. We expect a further improvement in AIR's performance upon switching to the episode runner. The experiment results are displayed in Figure~\ref{fig:scv2_results}. All the tested methods adopt the mixing network of QMIX for fairness. We observe that RODE~\citep{wang2020rode} suffers a severe deterioration in the effectiveness of policies, which indicates that it may learn open-loop policies in SMAC and could not deal with the randomness in SMACv2. The performance of LDSA~\citep{yang2022ldsa} is relatively better but still far from the highest win rate records. On the contrary, AIR is significantly superior and even exceeds the highest win rate records of all baselines in the original paper of SMACv2 (Ellis et al. 2023).

\begin{figure*}[h]
    \centering
    \includegraphics[width=0.95\textwidth]{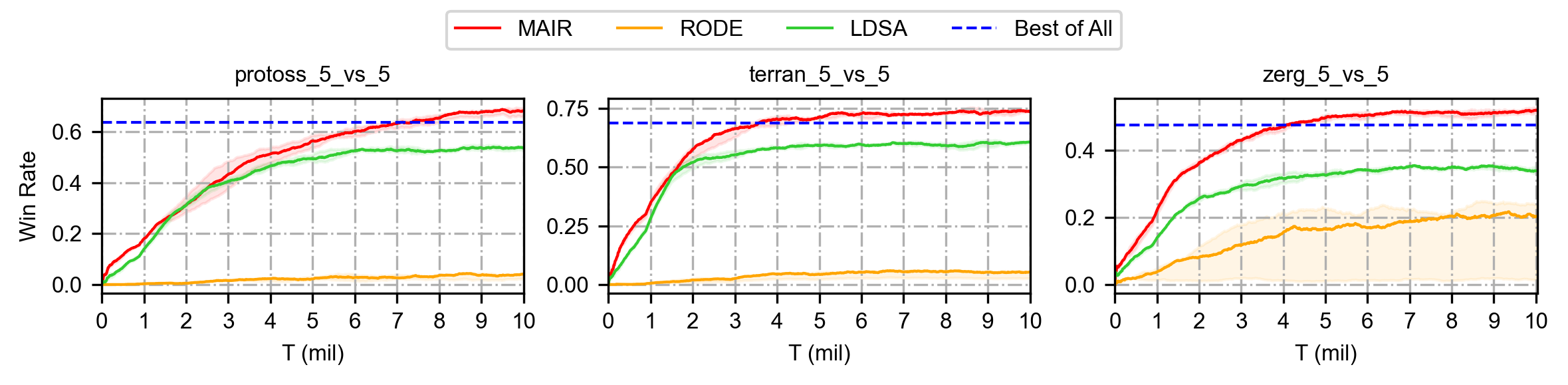}
    \caption{Comparison of AIR against baselines and the highest win rate records (in blue dotted line) in SMACv2 scenarios.}
    \label{fig:scv2_results}
\end{figure*}

\end{document}